\DeclareMathAlphabet{\mathcal}{OMS}{cmsy}{m}{n}
\DeclarePairedDelimiterX{\norm}[1]{\lVert}{\rVert}{#1}
\newtheorem{theorem}{Theorem}
\newtheorem{corollary}{Corollary}[theorem]
\newcommand\given[1][]{\:#1\vert\:}
\title{\LARGE \bf
Predicting optimal value functions by interpolating reward functions in scalarized multi-objective reinforcement learning
}
\author{Arpan Kusari$^{1}$ and Jonathan P. How$^{2}$
\thanks{*This work was supported by Ford Motor Company}
\thanks{$^{1}$Arpan Kusari is with Research and Advanced Engineering, Ford Motor Company, Dearborn, MI 48124, USA
        {\tt\small akusari@ford.com}}%
\thanks{$^{2}$Jonathan P. How is with the MIT Department of Aeronautics and Astronautics, Cambridge, MA 02139, USA
        {\tt\small jhow@mit.edu}}%
}
\begin{document}

\newcommand{\XX}[1]{{\color{red} \bf XX #1 XX}}
\newcommand{\comment}[1]{}

\maketitle
\thispagestyle{empty}
\pagestyle{empty}

\begin{abstract}
A common approach for defining a reward function for multi-objective reinforcement learning (MORL) problems is the weighted sum of the multiple objectives. The weights are then treated as design parameters dependent on the expertise (and preference) of the person performing the learning, with the typical result that a new solution is required for any change in these settings. This paper investigates the relationship between the reward function and the optimal value function for MORL; specifically addressing the question of how to approximate the optimal value function well beyond the set of weights for which the optimization problem was actually solved, thereby avoiding the need to recompute for any particular choice. We prove that the value function transforms smoothly given a transformation of weights of the reward function (and thus a smooth interpolation in the policy space). A Gaussian process is used to obtain a smooth interpolation over the reward function weights of the optimal value function for three well-known examples: Gridworld, Objectworld and Pendulum. The results show that the interpolation can provide robust values for sample states and actions in both discrete and continuous domain problems. Significant advantages arise from utilizing this interpolation technique in the domain of autonomous vehicles: easy, instant adaptation of user preferences while driving and true randomization of obstacle vehicle behavior preferences during training. 
\end{abstract}

\section{INTRODUCTION}
\label{sec:intro}
Reinforcement learning (RL) is a machine learning technique that provides the basis for decision-making, where a reward provided by the environment leads the agent to behave in a manner so as to maximize the cumulative sum of rewards. The reward function of RL problems often requires optimization of multiple, often conflicting objectives \cite{roijers2013survey}. For example, in the domain of autonomous vehicles, driving preferences have to be balanced between time to goal, comfort and safety \cite{prokop2001modeling}, which are correlated and its unclear how they influence each other. These conflicting objectives do not yield a single optimal solution, but rather a set of trade-off solutions which balance the objectives \cite{van2014multi}. The easiest way to solve the multi-objective problem is to use a linear scalarization function \cite{hwang2012multiple} that transforms the given problem into a standard single-objective using a weighted sum of the parameters. 


Sutton's reward hypothesis states \say{that all of what we mean by goals and purposes can be well thought of as maximization of the expected value of the cumulative sum of a received scalar signal (reward)}. Thus, the inference being that any given multi-objective problem can always be transformed into a single objective reward function. The most obvious problem in this case is that the weights used during training are a design parameter and dependent on the preference of the person designing the RL problem. Thus, the trained RL has a set optimal policy (and optimal value function) which is dependent on the weights provided. Having a fixed set of weights can be detrimental to the possibility of adaptation to different user experiences whereby for every instance of change of weights, the process of training (which is tedious and time intensive) needs to be repeated. 

A question which arises is: Given a small sparse group of optimal value functions under variable reward functions given by different weights, is it possible to interpolate through the entire space of the reward functions to provide exact estimates of optimal value functions at all possible states and actions? 


To the best of our understanding, prior research works focusing on value function interpolation have been used to show convergence of RL algorithms for countable and uncountable spaces. Ref.~\cite{davies1997multidimensional} proposed multilinear interpolation techniques on coarse grid to solve various RL paradigms. Ref.~\cite{szepesvari2001convergent} provided convergence of RL algorithms combined with value function interpolation while providing convergence of Q-learning for uncountable spaces. Although it is fairly obvious that changing the reward function would effect the value function directly, we have not found any research work which investigates the relationship and predicts it for weights not previously seen during training. 

The majority of MORL approaches consist of single-policy algorithms in order to learn Pareto optimal solutions \cite{mannion2017policy}. Ref. \cite{barrett2008learning} provides a modification of RL to learn all the optimal policies for all linear preference assignments by incorporating the convex hull of the value function. Ref. \cite{wang2012multi} uses Monte-Carlo Tree Search (MCTS) along with multi-objective indicator by the way of a hypervolume indicator to define action-selection criterion. Ref. \cite{van2014multi}, which uses multi-objective optimization techniques within a RL framework, creates a multi-policy algorithm that learns a set of Pareto dominating policies in a single run of the algorithm which they call Pareto Q-learning. While our proposed approach is useful for MORL problems, we do not aim to create a different MORL approach in this paper. Rather our research formulation is different than the existing MORL approaches in that we seek to derive value functions at unseen reward weights (in the training phase) from the neighboring interpolations. 

The aim of this research is to interpolate through the space of the value functions as a result of changing the weights of the reward function using a Gaussian Process (GP). The change in weights may be non-uniform, which makes the process highly nonlinear. Thus, it becomes a supervised learning problem where with the increase in the number of objectives, the weight space increases and data points becomes extremely sparse. Finding accurate value function values across problem space would be extremely beneficial for machine learning in general and autonomous vehicles in particular. GPs provide flexible function approximators, capable of learning intricate structure through their covariance kernels \cite{williams1996gaussian}. Utilizing the predictive power of GPs to interpolate through the high-dimensional input space should yield accurate value functions at all points of the large state space.   

This paper is organized as follows: Section \ref{sec:background} gives a background of RL and GP, Section \ref{sec:method} provides the claim along with the mathematical reasoning, Section \ref{sec:results} gives the results of the methodology on various standard RL examples, and Section \ref{sec:conclusions} gives the discussions and conclusions. 

\allowdisplaybreaks

\section{Background}
\label{sec:background}
\subsection{Reinforcement learning}
\label{subsec:rl}
In the RL task, at time t, the agent observes a state, $s_t \in$ S, which represents the environmental model of the system. It takes an action, $a_t \in$ A. The agent receives an immediate scalar reward $r_t$ and moves to a new state $s_{t+1}$. The environment's dynamics are characterized by state transition probabilities $p(s_{t+1}|s_t,a_t)$. This can be formally stated as a Markov Decision Process (MDP) where the next state can be completely defined by the previous state and action (Markov property) and receive a scalar reward for executing the action \cite{bellman1957markovian}.

The goal of the agent is to maximize the cumulative reward (discounted sum of rewards) or value function:
\begin{equation}
\label{eq:value_func}
V_t = \sum_{k=0}^{\infty} \gamma^k R_{t+k}
\end{equation} 
where $0 \leq \gamma \leq 1$ is the discount factor and $r_t$ is the reward at time-step $t$. 
In terms of a policy $\pi : S \rightarrow A$, the value function can be given by Bellman equation as:
\begin{align}
\label{eq:bellman}
V_{\pi}(s_t) &= \mathop{\mathbb{E}}_{\pi} \Bigg[  \sum_{k=0}^{\infty} \gamma^k R_{t+k} | S = s_t \Bigg] \\
 &= \mathop{\mathbb{E}}_{\pi} \Bigg[ R_t + \sum_{k=1}^{\infty} \gamma^k R_{t+k} | S = s_t \Bigg] \\
 &\begin{aligned}
 &= \sum_{a} \pi (a_t|s_t) \sum_{s_{t+1}} p(s_{t+1}|s_t,a_t)\\
 &\qquad \times \Bigg[ R(s_t,a_t,s_{t+1}) + \gamma \mathop{\mathbb{E}}_{\pi}  \Bigg[\sum_{k=0}^{\infty} \gamma^k R_{t+k} | S=s_{t+1} \Bigg] \Bigg] 
 \end{aligned} \\
 &\begin{aligned}
 &= \sum_{a} \pi (a_t|s_t) \sum_{s_{t+1}} p(s_{t+1}|s_t,a_t)\\
 &\qquad \times \Bigg[ R(s_t,a_t,s_{t+1}) + \gamma V_{\pi}(s_t+1) \Bigg]. 
 \end{aligned}
\end{align}
Using Bellman's optimality equation, we can  define a policy $\pi$ which is greater than or equal to any other policy $\pi'$, if value function $V_{\pi}(s_t) \leq V_{\pi'}(s_t)$ for all $s_t \in$ S. This policy is known as an optimal policy ($\pi^*$) and its value function is known as optimal value function ($V^*$).

For continuous state space problems, such as arising in control of nonlinear dynamical systems, a common approach to solve the problem is using value function approach \cite{sutton2000policy}. Value-function approach estimates a value function for each action and chooses the ``greedy" policy (policy having highest value function) at each time-step. Thus, the value function is updated until it converges to the optimal value function.  

\subsection{Gaussian process regression}
\label{subsec:gpr}
A stochastic process is a collection of random variables of functions, $\{f(x):x \in \mathcal{X}\}$, where the variables are collected from a set $\mathcal{X}$.  A GP is a special form of stochastic process, where any finite subset of the random variables has a multivariate Gaussian distribution \cite{rasmussen2003gaussian}. In particular, a collection of random variables $\{f(x):x \in \mathcal{X}\}$ is said to be drawn from a GP with mean function $m(\cdot)$ and covariance function $k(\cdot, \cdot)$, if for any finite set of elements $\{x_1, \cdots, x_n\} \in \mathcal{X}$, the associated finite set of random variables $\{f(x_1), \cdots , f(x_n)\}$
have distribution,
\begin{equation}
\label{eq:gp1}
    \begin{bmatrix}
    f(x_1) \\
    \vdots \\
    f(x_n) \\
    \end{bmatrix}
    \sim 
    \mathcal{N}
    \Bigg(
    \begin{bmatrix}
    m(x_1) \\
    \vdots \\
    m(x_n) \\
    \end{bmatrix},
    \begin{bmatrix}
    k(x_1, x_1) \!\!& \! \cdots \!& k(x_1, x_n) \\
    \vdots \!\!& \!\ddots \!& \vdots \\
    k(x_n, x_1) \!\!& \!\cdots\! & k(x_n, x_n) \\
    \end{bmatrix}
    \Bigg)
\end{equation}
and the resulting GP is then denoted as 
\begin{equation*}
    f(\cdot) \sim \mathcal{GP}(m(\cdot), k(\cdot, \cdot)).
\end{equation*}
While any real-valued function is suitable for mean function $m(\cdot)$, the kernel function $k(\cdot,\cdot)$ needs to guarantee positive-semidefiniteness. 

Let $P = \{(x(i), y(i))\}_{i=1}^n$ be a training set of i.i.d. examples from some unknown distribution. In the Gaussian process regression model,
\begin{equation}
\label{eq:gp2}
    y(i) = f(x(i)) + \epsilon(i), i = \{1, \cdots, n\}
\end{equation}
where the $\epsilon(i)$ are i.i.d. ``noise'' variables with independent $\mathcal{N} (0, \sigma^2)$ distributions. We assume a zero-mean Gaussian process prior, $f(\cdot) \sim \mathcal{GP}(0, k(\cdot, \cdot))$ with a covariance function $ k(\cdot, \cdot)$.  The marginal distribution over any set of input points belonging to $\mathcal{X}$ must have a joint multivariate Gaussian distribution. Therefore, for testing points $Q = \{x^*(i), y^*(i)$, the marginal distribution is given as
\begin{equation}
\label{eq:gp3}
    \begin{bmatrix}
    \vec{f} \\
    \vec{f^*} \\
    \end{bmatrix}
    \given[\Bigg]
    X, X^* \sim \mathcal{N} \left( 0,
    \begin{bmatrix}
    K(X,X) && K(X,X^*) \\
    K(X^*,X) && K(X^*,X^*)\\
    \end{bmatrix} \right)
\end{equation}
where $X$ is the matrix formulation of the training input vector, $X^*$ is the matrix formulation of the test input vector and $\vec{f^*}$ is the compactly written vector formulation of $f(x^*)$. The outputs can therefore be written as:
\begin{multline}
\label{eq:gp4}
    \begin{bmatrix}
    \vec{y} \\
    \vec{y^*} \\
    \end{bmatrix}
    \given[\Bigg]
    X, X^*
    = \begin{bmatrix}
    \vec{f} \\
    \vec{f^*} \\
    \end{bmatrix} + 
    \begin{bmatrix}
    \vec{\epsilon} \\
    \vec{\epsilon^*} \\
    \end{bmatrix} \\
    \sim \mathcal{N} \bigg(0,
    \begin{bmatrix}
    K(X,X)+\sigma^2 I && K(X,X^*) \\
    K(X^*,X) && K(X^*,X^*)+\sigma^2 I\\
    \end{bmatrix})
\end{multline}
where $\epsilon^*(i)$ are i.i.d. ``noise'' variables with independent $\mathcal{N} (0, \sigma^2)$ distributions. We derive the test outputs from Equation \ref{eq:gp4} as:
\begin{equation}
    \vec{y^*} \given[\big] \vec{y}, X, X^* \sim \mathcal{N}(\mu^*, \Sigma^*)
\end{equation}
where 
\begin{equation*}
    \mu^* = K(X^*,X)(K(X,X)+\sigma^2 I)^{-1} \vec{y}
\end{equation*} and 
\begin{equation*}
    \Sigma^* = K(X^*,X^*)+\sigma^2 I - K(X^*,X)(K(X,X)+\sigma^2 I)^{-1}K(X,X^*)
\end{equation*}

\section{Methodology}
\label{sec:method}
\subsection{Value function interpolation}
\label{subsec:value_inter}
In this section, we focus on providing mathematical justifications for the interpolation of value function based on the weights of the objectives of reward function.

For initial analysis, we wish to prove that given a simple, linear transformation of weights, the value function can be interpolated in an accurate manner. Intuitively, we are trying to derive the intermediate optimal value function giving the optimal policy for some MDP, where the reward is the weighted combination of various different objectives. 
\begin{theorem}
\label{theorem:grad-v}
For a reward function $R$ composed of $n$ different objectives, each associated with weight $w_i$, with the full set given by $\mathbf{w} = \{w_1, w_2, ..., w_n\}$, such that for a given state $s_t \in S$ and a given action $a_t \in A$, the reward function is
\begin{equation}
\label{eq:reward}
R(s_t,a_t) = w_1 r_1(s_t,a_t) + w_2 r_2(s_t,a_t) + ... + w_n r_n(s_t,a_t).
\end{equation}
where $r_1, r_2, ..., r_n$ are normalized reward functions at a given state $s_t$ and action $a_t$, respectively, the gradient of the state-value function with  respect to the weights exists, if all the rewards at the current state and action are finite.
\end{theorem}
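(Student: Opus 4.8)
The plan is to exploit the fact that the reward in \eqref{eq:reward} is linear in the weight vector $\mathbf{w}$, and that this linearity passes unchanged through the Bellman recursion \eqref{eq:bellman} to the value function, so that differentiability in $\mathbf{w}$ is essentially automatic once convergence of the discounted return is secured. First I would fix a policy $\pi$ (taking it to be the optimal policy $\pi^*$ at the current weights), expand the value function as the expected discounted return, substitute \eqref{eq:reward}, and use linearity of expectation to obtain
\begin{align*}
V_\pi(s_t) &= \sum_{i=1}^n w_i\, V_i^\pi(s_t), \\
V_i^\pi(s_t) &\coloneqq \mathop{\mathbb{E}}_\pi\!\left[\sum_{k=0}^\infty \gamma^k\, r_i(s_{t+k},a_{t+k}) \,\middle|\, S = s_t\right],
\end{align*}
i.e.\ the value function decomposes into a weighted sum of per-objective value functions $V_i^\pi$, each of which is independent of $\mathbf{w}$. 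Hence $V_\pi(s_t)$ is a linear (in particular differentiable) function of $\mathbf{w}$ with $\partial V_\pi(s_t)/\partial w_i = V_i^\pi(s_t)$; this expectation argument is agnostic to whether the state space is discrete or continuous.

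The second step is to check that the coefficients $V_i^\pi(s_t)$ are finite, which is where the hypothesis enters. Assuming, as the normalization suggests, that each $r_i$ is bounded, say $|r_i| \le M$, one gets $|V_i^\pi(s_t)| \le M\sum_{k=0}^\infty \gamma^k = M/(1-\gamma) < \infty$ since $\gamma < 1$; finiteness of the rewards at the states and actions visited is precisely what validates this geometric majorant and legitimizes differentiating the series term by term. For the tabular setting I would give the equivalent matrix argument: stacking states, the Bellman equation reads $\mathbf{V}_\pi = \mathbf{R}_\pi + \gamma P_\pi \mathbf{V}_\pi$ with $\mathbf{R}_\pi = \sum_i w_i\, \mathbf{r}_i^\pi$, and since $P_\pi$ is row-stochastic and $\gamma<1$, the matrix $I - \gamma P_\pi$ is invertible, so $\mathbf{V}_\pi = \sum_i w_i\,(I-\gamma P_\pi)^{-1}\mathbf{r}_i^\pi$ is again manifestly linear in $\mathbf{w}$, with gradient $(I-\gamma P_\pi)^{-1}\mathbf{r}_i^\pi$ in coordinate $i$.

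The main obstacle is that the \emph{optimal} value function is $V^*(s_t) = \max_\pi V_\pi(s_t) = \max_\pi \sum_i w_i V_i^\pi(s_t)$, a pointwise maximum of linear functions of $\mathbf{w}$; such a function is convex but only piecewise linear, so it can fail to be differentiable on the (measure-zero) set of weights at which the maximizing policy switches. I would handle this by working on an open neighborhood of $\mathbf{w}$ on which the optimal policy $\pi^*$ is constant --- the generic situation, in which small perturbations of $\mathbf{w}$ do not change the optimal policy --- so that locally $V^* = V_{\pi^*}$ and the computation above applies verbatim; I would also note that one-sided directional derivatives of $V^*$ exist everywhere by Danskin's theorem. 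The write-up should be explicit that the finiteness hypothesis is what guarantees existence and finiteness of the per-objective value functions $V_i^\pi$, while differentiability of $V^*$ itself additionally rests on this local constancy of the optimizing policy.
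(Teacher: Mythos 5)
Your proposal is correct, and it takes a genuinely different (and in fact more careful) route than the paper. The paper never decomposes the value function; instead it forms the difference $\Delta V^*(s_t|\mathbf{w},\mathbf{w}')$ of the two optimal value functions, applies the inequality $\max(b)-\max(a)\le\max(b-a)$ to bound that difference by $\max_\pi \mathbb{E}[\sum_t \gamma^t\, \Delta\mathbf{w}\cdot\bm{r}]$, and then divides by $\Delta w_i$ to exhibit a bounded ``approximate gradient,'' concluding that the gradient exists whenever the rewards are finite. That argument really only delivers a one-sided Lipschitz-type bound on the difference quotient (the symmetric lower bound is not even written down), so existence of the gradient does not actually follow from it. Your decomposition $V_\pi = \sum_i w_i V_i^\pi$ with weight-independent coefficients $V_i^\pi$ makes the underlying structure explicit: for a fixed policy the value is exactly linear in $\mathbf{w}$, so $V^*=\max_\pi \sum_i w_i V_i^\pi$ is a pointwise maximum of linear functions --- convex and piecewise linear --- and you correctly isolate the one genuine obstruction, namely non-differentiability at weights where the maximizing policy switches, resolving it by restricting to a neighborhood on which $\pi^*$ is constant (with Danskin giving one-sided directional derivatives elsewhere). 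What your approach buys is an exact formula $\partial V^*/\partial w_i = V_i^{\pi^*}(s_t)$ where the gradient exists, plus an honest identification of where the theorem as literally stated fails; what the paper's approach buys is brevity and the explicit bound $\Delta V^*/\Delta w_i \le \max_\pi \mathbb{E}[\sum_t\gamma^t\,\bm{1}\cdot\bm{r}]$ that it reuses in the corollary for the action-value function. Your finiteness step (geometric majorant $M/(1-\gamma)$ from bounded normalized rewards and $\gamma<1$) is the same role the paper's finiteness hypothesis plays, just stated more precisely.
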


\begin{proof}
The optimal value at a state is given by the state-value function 
\begin{equation}
\label{eq:state-value-function}
V^*(s_t) = \max_\pi \mathop{\mathbb{E}}\left[\sum_{t=0} \gamma^{t} R(s_t)\right],
\end{equation}
where $R(s_t) = \max_\pi R(s_t, a_t)$.
Given a particular set of weights, we substitute  (\ref{eq:reward}) into (\ref{eq:state-value-function}) to obtain 
\begin{multline}
\label{eq:reward-weight1-state-value}
V^*(s_t|\mathbf{w}) = \max_\pi \mathop{\mathbb{E}}[ \sum_{t=0}^\infty \sum_{a_t \in A} \gamma^t (w_1 r_1(s_t,a_t) + \cdots \\+w_n r_n(s_t,a_t))].
\end{multline}
However, note that 
for a different set of weights $\mathbf{w}' = {w_1', \ldots, w_n'}$, the optimal state-value function is 
\begin{multline}
\label{eq:reward-weight2-state-value}
V^*(s_t|\mathbf{w}') = \max_\pi \mathop{\mathbb{E}}[ \sum_{t=0}^\infty \sum_{a_t \in A} \gamma^t (w_1' r_1(s_t,a_t) +   \cdots \\+w_n' r_n(s_t,a_t))].
\end{multline}

\newpage
Subtracting (\ref{eq:reward-weight1-state-value}) from (\ref{eq:reward-weight2-state-value}) yields
\begin{multline}
\label{eq:deltav-1}
\Delta V^*(s_t| \mathbf{w}, \mathbf{w}') = \\
\max_\pi \mathop{\mathbb{E}}[ \sum_{t=0}^\infty \sum_{a_t \in A} \gamma^t (w_1' r_1(s_t,a_t) + \cdots + w_n' r_n(s_t,a_t))] - \\
\max_\pi \mathop{\mathbb{E}}[ \sum_{t=0}^\infty \sum_{a_t \in A} \gamma^t (w_1 r_1(s_t,a_t) +  \cdots + w_n  r_n(s_t,a_t))]
\end{multline}
Using the property $\max (b) - \max (a) \leq \max (b-a)$ yields
\begin{multline}
\label{eq:deltav-2}
\Delta V^*(s_{t}| \mathbf{w}, \mathbf{w}') \leq \max_\pi \mathop{\mathbb{E}}[\sum_{t=0}^\infty \sum_{a_t \in A} \gamma^{t}
((w_1'-w_1) r_1(s_t,a_t) + ...\\ + (w_n'-w_n) r_n(s_t,a_t))]
\end{multline}
Equation (\ref{eq:deltav-2}) can be written in matrix form as
\begin{equation}
\label{eq:deltav-3}
\Delta V^*(s_{t}| \mathbf{w}, \mathbf{w}') \leq \max_\pi \mathop{\mathbb{E}}\left[\sum_{t=0}^\infty \sum_{a_t \in A} \gamma^{t} \cdot 
\Delta \mathbf{w}\cdot \bm{r}\right]
\end{equation}
where $\Delta \mathbf{w} = \mathbf{w}' -\mathbf{w} $
and 
\begin{equation*}
\bm{r} = 
\begin{bmatrix}
r_1(s_t, a_t)\\ 
\vdots \\
r_n(s_t, a_t) 
\end{bmatrix}^T
\end{equation*}
Since, $\bm{\Delta w}$ is constant for all states and actions, (\ref{eq:deltav-3}) can be rearranged as 
\begin{equation}
\frac{\Delta V^*(s_{t}| \mathbf{w}, \mathbf{w}')}{\Delta w_i}  \leq \max_\pi \mathop{\mathbb{E}}[\sum_{t=0}^\infty \sum_{a_t \in A} \gamma^{t} \cdot
\bm{1}\cdot \bm{r}]
\end{equation}
which gives the approximate gradient of the value function with respect to the $i^{th}$ weight. If all the rewards at the current state and action are finite, then the gradient will exist for that given state of the MDP.
Thus, the linear interpolation of weights in reward function leads to smooth interpolation of state-value function. 
\end{proof}

\begin{corollary}
Under linear transformation of weights in reward function, the gradient of the action-value function with respect to the weights exists, if all the rewards at the current state and action are finite. 
\end{corollary}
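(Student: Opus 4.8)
The plan is to mirror the proof of Theorem~\ref{theorem:grad-v}, carrying every step through with the action-value function $Q^*(s_t,a_t\mid\mathbf{w})$ in place of the state-value function. I would write
\begin{equation*}
Q^*(s_t, a_t \mid \mathbf{w}) = \mathop{\mathbb{E}}\left[ \sum_{k=0}^\infty \gamma^k R(s_{t+k}) \mid S = s_t,\, A = a_t \right],
\end{equation*}
where the first action is fixed at $a_t$ and the optimal policy is followed thereafter, and substitute the linear decomposition~(\ref{eq:reward}). For a second weight vector $\mathbf{w}'$ I would form $\Delta Q^*(s_t,a_t\mid\mathbf{w},\mathbf{w}')$ exactly as in~(\ref{eq:deltav-1}), apply the inequality $\max(b)-\max(a)\le\max(b-a)$ over the policies governing the post-$a_t$ trajectory, collect the weight differences into $\Delta\mathbf{w}$ multiplying the reward vector $\bm{r}$ as in~(\ref{eq:deltav-3}), and divide by $\Delta w_i$ to bound the $i$-th difference quotient. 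Finiteness of each $r_i(s_t,a_t)$ together with $0\le\gamma<1$ then bounds the resulting quantity, giving existence of the gradient $\nabla_{\mathbf{w}} Q^*(s_t,a_t)$.

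A shorter route, better suited to a corollary, is to use the Bellman relation $Q^*(s_t,a_t\mid\mathbf{w}) = \mathop{\mathbb{E}}\big[R(s_t,a_t) + \gamma V^*(s_{t+1}\mid\mathbf{w}) \mid s_t,\, a_t\big]$. The immediate-reward term is $\mathbf{w}\cdot\bm{r}(s_t,a_t)$, whose gradient in $\mathbf{w}$ is simply $\bm{r}(s_t,a_t)$ and is finite by hypothesis, while Theorem~\ref{theorem:grad-v} already supplies the gradient of $V^*(s_{t+1}\mid\mathbf{w})$ under the same finiteness assumption. Existence of $\nabla_{\mathbf{w}} Q^*(s_t,a_t)$ then follows by linearity of expectation and of the gradient operator, with essentially no calculation.

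The main obstacle is the same one that is implicit in the theorem: passing the gradient through the infinite discounted sum and the expectation, i.e.\ justifying $\nabla_{\mathbf{w}} \mathop{\mathbb{E}}[\sum_k \gamma^k R(s_{t+k})] = \mathop{\mathbb{E}}[\sum_k \gamma^k \nabla_{\mathbf{w}} R(s_{t+k})]$. This interchange requires the normalized rewards $r_i$ to be uniformly bounded along reachable trajectories so that, with $\gamma<1$, dominated convergence applies; accordingly the hypothesis ``all rewards at the current state and action are finite'' should be read as this uniform bound, and I would state it explicitly. Granting that, both routes close immediately, and the corollary follows from Theorem~\ref{theorem:grad-v} with only the extra, trivially finite, immediate-reward term $\bm{r}(s_t,a_t)$.
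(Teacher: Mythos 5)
Your second (``shorter'') route is essentially the paper's own proof: the paper writes $Q^*(s_t,a_t) = R(s_t,a_t) + \gamma \sum p(s_{t+1}\mid s_t,a_t)\,V^*(s_{t+1})$, takes the difference under two weight vectors to get $\Delta Q^* = \bm{\Delta w}\cdot\bm{r} + \gamma\sum p\,\Delta V^*$, substitutes the bound on $\Delta V^*$ from Theorem~\ref{theorem:grad-v}, and divides by $\Delta w_i$. Your observation that the interchange of gradient, infinite sum, and expectation needs a uniform bound on the $r_i$ is a point the paper leaves implicit, so your write-up is, if anything, more careful than the original.
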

\begin{proof}
For a optimal state-value function $V^*(s)$ that gives the best value at that particular state, the optimal action-value function (optimal value of a state and action combination) is
\begin{equation}
    \label{eq:q-value}
    Q^*(s_t,a_t) = R(s_t,a_t) + \gamma \sum_{t=0}^\infty p(s_{t+1}|s_t,a_t) V^*(s_{t+1}).
\end{equation}
Given two different set of weights, the difference in q-value functions can be written as:
\begin{multline}
    \label{deltaq-1}
    \Delta Q^*(s_t,a_t) = \bm{\Delta w}\cdot \bm{r} + \gamma \sum_{t=0}^\infty p(s_{t+1}|s_t,a_t) \Delta V^*(s_{t+1})
\end{multline}
Replacing $\Delta V^*$ from (\ref{eq:deltav-3}) gives
\begin{multline}
    \label{deltaq-2}
    \Delta Q^*(s_t,a_t) \leq \bm{\Delta w}\cdot \bm{r} + \gamma \sum_{t=0}^\infty p(s_{t+1}|s_t,a_t) \\ \max_\pi \mathop{\mathbb{E}}[\sum_{t=0}^\infty \sum_{a_t \in A} \gamma^{t} \bm{\Delta w}\cdot \bm{r}].
\end{multline}
Therefore the gradient of $Q^*$ with respect to the $i^{th}$ weight is given as:
\begin{multline}
    \label{deltaq-3}
    \frac{\Delta Q^(s_t, a_t)}{\Delta w_i} \leq \bm{1} \cdot \bm{r} + \gamma \sum_{t=0}^\infty p(s_{t+1}|s_t,a_t) \\ \max_\pi \mathop{\mathbb{E}}[\sum_{t=0}^\infty \sum_{a_t \in A} \gamma^{t} \cdot \bm{1} \cdot \bm{r}]
\end{multline}
\end{proof}
%


The shaped reward function is a specific case of the MORL reward function, whereby, the reward function is augmented using an indicator function in which a positive reward is given if the next state is closer to the goal and can be represented as
\begin{equation}
\label{eq:shaping_reward_function}
R'(s_t,a_t,s_{t+1}) = R(s_t,a_t,s_{t+1}) + r \cdot \mathbf{I}(d(s_{t+1},G) < d(s_t,G))
\end{equation}
where $G$ is the goal state. Assuming that the goal state is constant across the different weights of the reward function, the added shaped reward remains constant for the given state across weights. Thus, the reward shaping does not pose any problems for interpolating reward functions. 

\section{Results}
\label{sec:results}
We use three different example tasks with various degrees of complexity to test the validity of our approach. These example tasks have multiple objectives which need to be optimized simultaneously using the RL framework. We change the weights of these objectives and intend to predict the resulting value function using GP regression. 
 
\subsection{Gridworld}
The gridworld \cite{sutton2018reinforcement} is a discrete $N\times N$ grid with four actions per state (corresponding to steps in each direction) and each action has a 10\% chance of being randomly changed to a different action. If the agent hits a wall, then it stays in the previous state. The goal states correspond to a large terminal reward and there is a negative living reward for each of the other states, which incentivizes the agent to reach the goal as fast as possible. There is a walled state in the $(2,2)$ position. The default terminal rewards are $p=+1$ and $n=-1$ and the default living reward is $l=-0.02$. 

The GP regression from Scikit in Python \cite{scikit-learn} is used to determine the interpolated value function, where the input vector $X$ corresponds to a state vector augmented with the discrete action and the weights of the reward function, and the scalar target $y$ corresponds to the value function.  The Matern kernel is utilized for training the GP with default parameters in all the cases. We used other kernels, but we did not find sufficient difference between the kernel choices. 

We vary the living reward and terminal rewards for the different experiments. Two kinds of metrics are reported: the mean squared error between the actual value function and predicted value function over all states and all actions and the  median value of the standard deviation at the query points. Both interpolated and extrapolated query points are reported, which are presented as representative samples. 
\begin{figure*}[t]
    \centering
\includegraphics[scale=0.25]{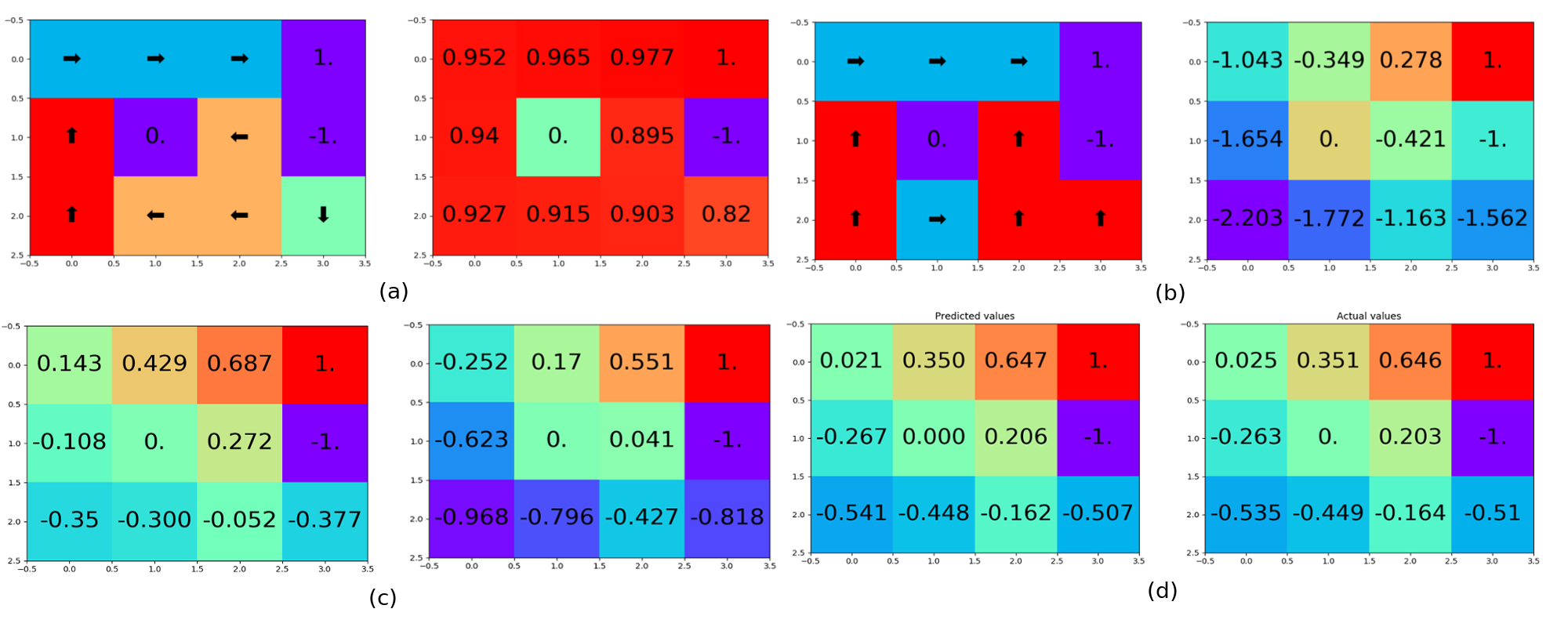}
\vspace*{-.2in}
         \caption{(a) Optimal policy and optimal value function for living reward ($0$) and (b) optimal policy and value function for living reward $-0.5$. (c) For the interpolation of living reward ($-0.23$), we show the optimal value functions for two neighboring points with living reward ($-0.2$) and living reward ($-0.3$). (d) Predicted and actual optimal function values for living reward ($-0.23$). }
        \label{fig:reward_value_living}
\end{figure*}


\subsubsection{Changing the living reward}
We vary the living reward ($l$) of all states (except the terminal states) to vary the optimal policy (and by virtue the optimal state value function) in a way that the variability is nonlinear. The training is performed by varying the living reward from $l=0$ to $l=-0.5$ by steps of $-0.1$. In the Table \ref{tab:living_reward}, the results are presented for four different interpolated evaluation living rewards as well as an extrapolated evaluation living reward. The interpolation results are shown to be accurate to the fourth decimal place while the extrapolation is within a feasible error bound. To understand the effect of variability of living reward on the optimal state-value function and the subsequent optimal policy, Figure~\ref{fig:reward_value_living}(a) and \ref{fig:reward_value_living}(b) shows the optimal state-value function and policy for extreme living rewards $l=0$ and $l=-0.5$, respectively. It is clear from the optimal policies that a change in living reward alters the solution to the gridworld problem sufficiently and there is a need to capture the variability in the living reward. Figure~\ref{fig:reward_value_living}(c) gives the optimal state-value functions at the neighboring points ($l=-0.2$ and $l=-0.3$) of the example living reward $l=-0.23$, which shows substantial variability in the optimal state-value functions. Figure~\ref{fig:reward_value_living}(d) gives the predicted and actual optimal state-value function values at the chosen living reward ($l=-0.23$), which shows that the results are accurate to the third decimal place for all states; thus proving the accuracy of the interpolation for the entire state space. 

\begin{table}[t]
\caption{Predicting value functions for living rewards}
\label{tab:living_reward}
\begin{center}
\begin{tabular}{c c c}
\hline
Living reward (l) & Mean squared error & Median sigma\\
\hline
\rule{0pt}{10pt} -0.16 & 1.019e-04 & 1.732e-03\\
\rule{0pt}{10pt} -0.23 & 1.529e-05 & 1.496e-02\\
\rule{0pt}{10pt} -0.37 & 3.401e-05 & 1.764e-02\\
\rule{0pt}{10pt} -0.45 & 9.273e-04 & 5.111e-02\\
\rule{0pt}{10pt} -0.60 & 4.999e-02 & 2.382e-01\\
\hline
\end{tabular}
\end{center}
\end{table}

\subsubsection{Changing the negative terminal reward}
\begin{table}[b]         
\caption{Predicting value functions for negative terminal rewards}
\label{tab:negative_reward}
\begin{center}
\begin{tabular}{c c c}
\hline
Negative reward (n) & Mean squared error & Median sigma\\
\hline
\rule{0pt}{10pt} -1.3 & 4.098e-03 & 4.246e-03\\
\rule{0pt}{10pt} -2.2 & 2.678e-07 & 8.535e-04\\
\rule{0pt}{10pt} -3.6 & 3.099e-10 & 1.282e-04\\
\rule{0pt}{10pt} -4.7 & 8.290e-08 & 2.582e-04\\
\rule{0pt}{10pt} -6.0 & 7.025e-06 & 2.304e-03 \\ 
\hline
\end{tabular}
\end{center}
\end{table}
The negative terminal reward ($n$) is varied from $n=-1$ to $n=-5$ with steps of $-0.5$. The evaluations are given in the Table \ref{tab:negative_reward}. Again, both interpolation (first four rows) and extrapolation (last row) evaluation cases were considered.
Note that with an increase in magnitude of the negative terminal reward, the value function in the other states is not influenced (due to the max operator) and thus the results only reflect the difference in the negative terminal state. 

\subsubsection{Changing the positive terminal reward}
Table \ref{tab:positive_reward} shows the result when the positive terminal reward ($p$) is changed from $p=1$ to $p=5$ with steps of $0.5$ and evaluated at the same random points (positive in this case) as in Table~\ref{tab:negative_reward}. The results clearly show that, in both interpolation and extrapolation, the GP is able to recover the value functions. 

\begin{table}[t]
\caption{Predicting value functions for positive terminal rewards}
\label{tab:positive_reward}
\begin{center}
\begin{tabular}{c c c}
\hline
Positive reward (p) & Mean squared error & Median sigma\\
\hline
\rule{0pt}{10pt} 1.3 & 3.876e-05 & 1.193e-03\\
\rule{0pt}{10pt} 2.2 & 6.419e-08 & 4.502e-04\\
\rule{0pt}{10pt} 3.6 & 2.372e-10 & 9.059e-04\\
\rule{0pt}{10pt} 4.7 & 6.789e-08 & 7.603e-04\\
\rule{0pt}{10pt} 6.0 & 1.596e-05 & 1.053e-03\\
\hline
\end{tabular}
\end{center}
\end{table}

%


\subsection{Objectworld}
Objectworld \cite{levine2011nonlinear} is an extension of gridworld that features random objects placed in the grid (Figure~\ref{fig:objectworld}(a)). The objects are assigned a random outer and inner color (out of $C$ colors) with the state vector being composed of the Euclidean distance to the nearest object with a specific inner or outer color. The true reward is positive in states that are both within $3$ cells of outer color $1$ and $2$ cells of outer color $2$, negative within $3$ cells of outer color $1$, and zero otherwise. Inner colors and all other outer colors are distractors. In the given example, we use two colors, blue and red. Fifteen different objects are placed randomly within the $10\times 10$ grid with randomly chosen inner and outer color. The positive reward is varied from $0.5$ to $1$ with $0.6$, $0.7$ and $0.8$ points being predicted. 

The formulation for GP regression is similar to the ones used in Gridworld. Figure~\ref{fig:objectworld}(b) shows the actual value function while Figure~\ref{fig:objectworld}(c) provides the predicted value function. Table \ref{tab:reward_objectworld} provides the statistics for the given prediction. The interpolation is not accurate as in gridworld due to the nonlinearity of the reward with respect to the states, but the GP can still recover values close to the actual values, especially in the positive reward region. 

\begin{table}[t]
\caption{Predicting value functions for rewards in Objectworld}
\label{tab:reward_objectworld}
\begin{center}
\begin{tabular}{c c c}
\hline
Reward & Mean squared error & Median sigma\\
\hline
\rule{0pt}{10pt} 0.6 &  7.594e-02 & 7.673e-03\\
\rule{0pt}{10pt} 0.7 &  4.571e-02 & 9.608e-03\\
\rule{0pt}{10pt} 0.8 &  2.415e-02 & 1.526e-03\\
\hline
\end{tabular}
\end{center}
\end{table}

\begin{figure*}[h]
    \centering
        \includegraphics[scale=0.21]{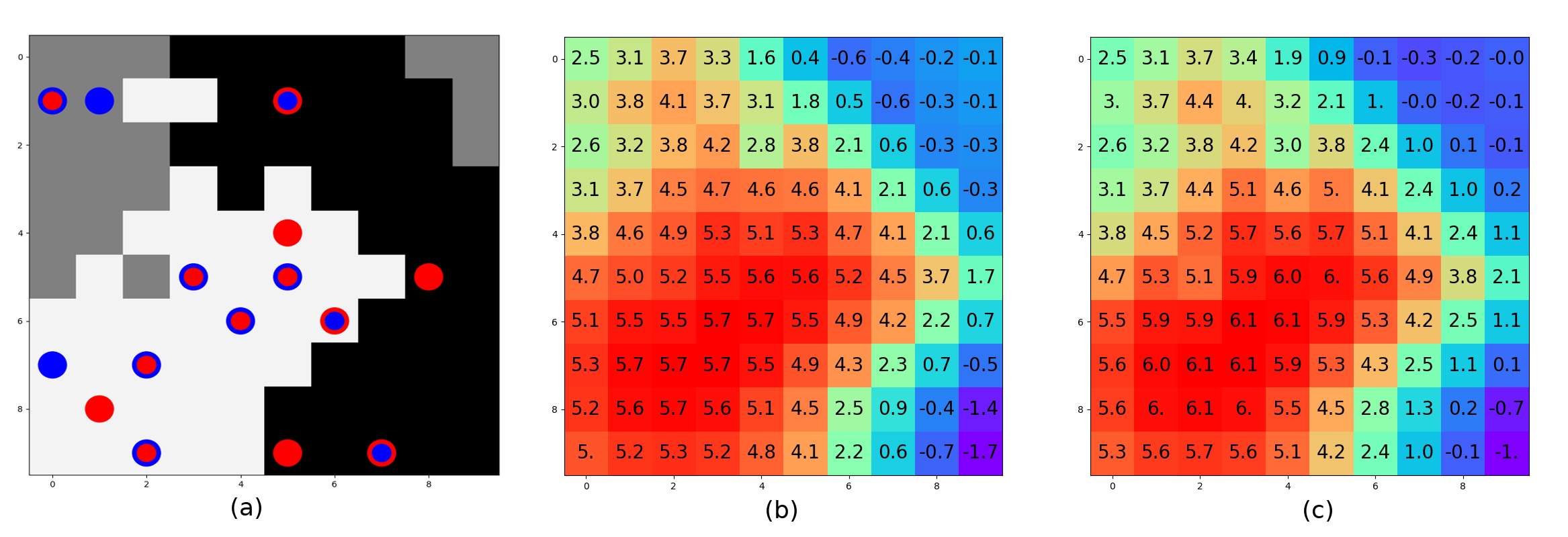}\vspace*{-.15in}
        \caption{(a) Objectworld with 15 randomly placed objects in blue and red inner and outer colors chosen randomly; white represents positive reward, black negative reward and grey zero reward (b) Actual value function for positive reward ($0.8$) (c) Predicted value function}
        \label{fig:objectworld}
\end{figure*}

\begin{figure}[h!]
    \centering
        \includegraphics[trim=60 40 80 10,clip,width=\columnwidth]{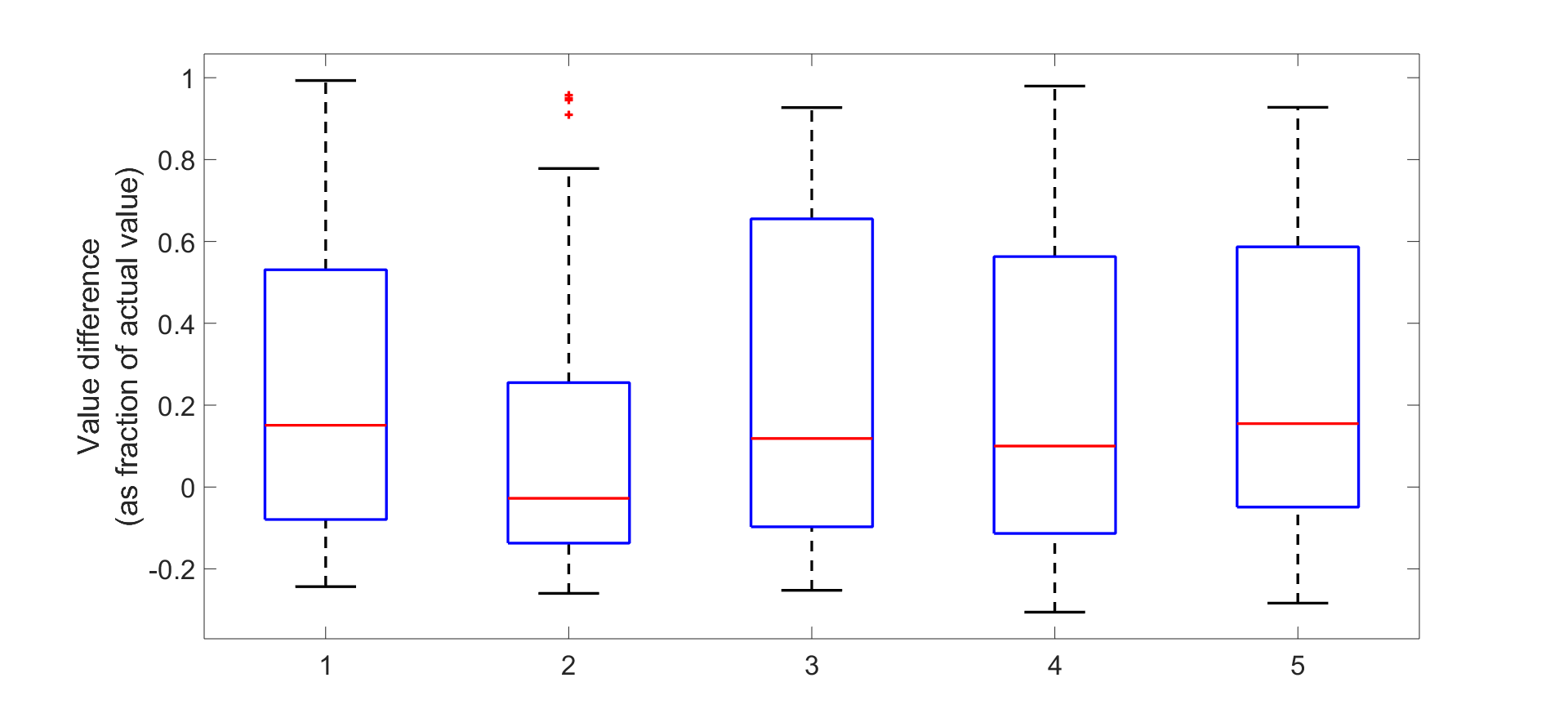}
        \caption{Boxplots for 5 example episodes showing the difference between the predicted and actual values derived for the weight $w3 = 0.001$}
        \label{fig:boxplot}
\end{figure}

\subsection{Pendulum}
The pendulum environment \cite{brockman2016openai} is an well-known problem in the control literature in which a pendulum starts from a random orientation and the goal is to keep it upright while applying the minimum amount of force. The state vector is composed of the cosine (and sine) of the angle of the pendulum, and the derivative of the angle. The action is the joint effort as $5$ discrete actions linearly spaced within the $[-2, 2]$ range. The reward is 
\begin{equation}
    \label{eq:pendulum_reward}
    R = -(w_1 \cdot \norm[\big]{\theta}^2 + w_2 \cdot \norm[\big]{\dot{\theta}}^2 + w_3 \cdot \norm[\big]{a}^2),
\end{equation}
where $w_1$, $w_2$ and $w_3$ are the reward weights for the angle $\theta$, derivative of angle $\dot{\theta}$ and action $a$ respectively. The optimal reward weights given by OpenAI are $[1, 0.1, 0.001]$ respectively. An episode is limited to 1000 timesteps. 

A deep Q-network (DQN) was proposed in \cite{mnih2015human} that combines deep neural networks with RL to solve continuous state discrete action problems. DQN uses a neural network that gives the Q-values for every action and uses a buffer to store old states and actions to sample from to help stabilize training. The pendulum environment is solved using the DQN approach for various $w_3 =  \{0.1, 0.01, 0.001, 0.0001\}$ with the evaluation performed at $w_3=0.001$. Since this is a continuous state problem, we utilize the trained evaluation model to transition to the next state. 

Utilizing a DQN provides no guarantees that the states seen during testing have been visited during training, which can lead to out-of-distribution states. Thus, we have to utilize a robust Student-t likelihood using the GP regression in GPFlow package \cite{GPflow2017}. The boxplots for the difference in values for 5 sample evaluation episodes are provided in Figure~\ref{fig:boxplot}.  Thus, we use these boxplots to show the value difference as a fraction of the actual value at that state and action. The  boxplots show  that  the  GP  is  able  to  recover  a  value  close  to the actual value (with zero being no difference and greater than $1$ meaning that the predicted value is not able to recover the actual value at all) for the majority of the episodes for continuous state domain problems.  

\section{Conclusions}
\label{sec:conclusions}
This paper shows a direct relationship between the weights of the reward function and the optimal value function for scalarized MORL. This helped us in interpolating through a space of optimal value functions generated using the sparse set of reward functions to estimate the value functions at  sample states. The specific example problems were chosen to understand the value function hypersurface as a function of the reward function. Using GP to interpolate between value functions help us to benefit from prior work in GP regression. Utilizing this relationship would be very beneficial in high-dimensional problems where the instant adaptation of optimal value functions (and thus optimal policies) would save time and cost required for retraining. 

The scalarization approach of MORL is restrictive in that it cannot work with objectives where Pareto fronts are non-convex or have discontinuities \cite{das1997closer}. MORL is an area of active research that uses algorithms leveraged from the multi-objective optimization literature. However, our paper deals with problems which have a defined convex Pareto front and provides a very simple technique in determining optimal value functions at different weights. 

Future work will focus on developing transfer learning of specific behaviors in multi-agent environments with different reward functions based on different weights.

\balance
\bibliographystyle{IEEEtran}
\bibliography{iros}


\comment{\subsection{Cartpole}
The cartpole environment \cite{brockman2016openai}, also known as the inverted pendulum problem, is a well-known problem in RL whereby a pole is attached to a cart. The pole is unstable and the only way to keep it upright is by moving the cart horizontally left or right by one unit. The episode ends when the pole is more than 15$^\circ$ from vertical, or the cart moves more than $2.4$ units from the center. The state vector is composed of the position of the cartpole, the angle of the cartpole, and their derivatives. The default reward is the amount of time the pole is upright in a given episode (time limit of 500 timesteps). Due to the nature of the continuous state space, it is impossible to visit every state and action pair during training or to know when the training is complete. OpenAI judges the training to be complete when the agent gets an average of 475 (out of a possible 500) or above for 100 consecutive episodes. 

Deep Q-network (DQN) has been proposed by \cite{mnih2015human} which combines deep neural networks with RL to solve continuous state discrete action problems. DQN uses a neural network that gives the Q-values for every action and uses a buffer to store old states and actions to sample from which helps to stabilize training. The cartpole environment can be easily solved using the DQN approach.  

We modify the reward in two ways: we provide a preferential reward of $+1$ if the position is within $1$ unit from the center and a reward of $+1$ if the angle is less than 5$^\circ$ from vertical. For both of these cases, we have a pair of weights associated with them giving a measure of how important they are to the user. We train the cartpole using DQN on four different weight pair points, $(0,0)$, $(0,1)$, $(1,0)$ and $(1,1)$. The optimal policies for each of the weight points are different for 50\% of the total sampled states. 
Since, we augment the original reward vector, we define the training as complete when the average reward exceeds the following limit for 100 consecutive episodes:
\begin{equation}
\begin{split}
\text{max reward} &= (\text{time limit} - 25) \\
           & ~\quad +\text{weight position} \times \text{time limit} \\
           & ~\quad +\text{weight angle} \times \text{time limit}
\end{split}
\end{equation}
%
%
We then sample the set of 200 states and all actions and the subsequent reward in the X-vector and the given q-value in the y-vector. We predict the \XX{Q?} q-values for weights starting from $0$ to $1$ with steps of $0.2$ for position and angle \XX{only 1 range given not two, so confused} respectively. Figure~\ref{fig:cartpole_avg_std} gives the average difference and standard deviation of difference between the predicted \XX{} q-values and trained \XX{} q-values scaled by the maximum q-value. We find the maximum average difference to be around 4\% and the maximum standard deviation to be 2\%. The optimal policies resulting from the interpolation agree within a 5\% tolerance from the actual policies. }

\comment{subsection{Autonomous vehicle driving}
We devise a simple simulation environment for autonomous driving with the ego-vehicle initialized at a random distance on a random lane and other obstacle cars initialized within a certain distance of the ego-vehicle, with a random initial speed and a random lane. The simulation environment is shown in Figure~\ref{fig:car_sim}. The obstacle cars exhibit Adaptive Cruise Control (ACC) driving behaviors, keeping a constant speed if outside the safety distance limit or the safe distance if within the limit. The state is given by the longitudinal and lateral affordance indicators \cite{chen2015deepdriving} with respect to the ego vehicle. The ego-vehicle can take five actions: three levels of speed (1, 2 and 3 units/step) and lane change (left and right). For the lane change, the car moves one lane in one time-step laterally. The training episode ends if the car is outside the bounds laterally or if there is a collision between the ego-vehicle and other obstacle cars.

\begin{figure}[t]
    \centering
        \includegraphics[width=1.0\columnwidth,height=0.1\textheight]{ieeeconf/car_sim_environment.png}
        \caption{Simple car simulation environment with the ego-car in red and other cars in blue. The maximum sensing distance is given as a red rectangle.The states are given by the lateral distances from the edge of leftmost lane and rightmost lane and the longitudinal distances and relative velocities to nearest cars in each lane.}
        \label{fig:car_sim}
\end{figure}

A self-driving vehicle has to negotiate between two main factors to drive optimally: comfort and safety. The reward can then be composed of these factors whereby minimizing comfort leads to maximizing lateral acceleration and minimizing safety leads to maximizing longitudinal acceleration, resulting in aggressive behavior. In case of the episode ending early, a large penalty is imposed. 

As in the cartpole example, the four terminal weight pair points (0,0), (0,1), (1,0) and (1,1) are used for training. Three different weight pair points are chosen in a linear fashion: (0.25,0.25), (0.5,0.5),(0.75,0.75). The average error and the median sigma are reported for the three weight points in Table \ref{tab:auto}. 

\begin{table}[h]
\caption{Interpolating value functions for weight points for Autonomous driving}
\label{tab:auto}
\begin{center}
\begin{tabular}{c c c}
\hline
Weight points & Mean squared error & Median sigma\\
\hline
(0.25,0.25) & XX & XX\\
(0.5,0.5) & XX & XX\\
(0.75,0.75) & XX & XX\\
\hline
\end{tabular}
\end{center}
\end{table}}

\end{document}